
\documentclass{article}

\usepackage{times}
\usepackage{graphicx} 
\usepackage{subfigure}

\usepackage{natbib}

\usepackage{algorithm}
\usepackage{algorithmic}

\usepackage{hyperref}

\usepackage{amssymb, amsmath, amsthm}
\newtheorem{lemma}{Lemma}
\newtheorem{theorem}{Theorem}

\DeclareMathOperator*{\argmin}{arg\,min}


\usepackage[accepted]{icml2014}

\icmltitlerunning{Learning Structures for Deep Neural Networks}

\begin{document}
\abovedisplayskip=3.0pt plus 0.0pt minus 3.0pt
\belowdisplayskip=3.0pt plus 0.0pt minus 3.0pt
\jot=-1.0pt


\twocolumn[
\icmltitle{Learning Structures for Deep Neural Networks}

\icmlauthor{Jinhui Yuan}{yuanjinhui@oneflow.org}
\icmladdress{OneFlow, Beijing, China}
\icmlauthor{Fei Pan, Chunting Zhou, Tao Qin, Tie-Yan Liu}{taoqin,tyliu@microsoft.com}
\icmladdress{Microsoft Research, Beijing, China}

\icmlkeywords{structure learning, deep neural networks, ICML}

\vskip 0.3in
]

\begin{abstract}
In this paper, we study the automatic structure learning for deep neural networks (DNN), motivated by the observations that the performance of a deep neural network is highly sensitive to its structure and previous successes of DNN heavily depend on human experts to design the network structures. We focus on the unsupervised setting for structure learning and propose to adopt the efficient coding principle, rooted in information theory and developed in computational neuroscience, to guide the procedure of structure learning without label information. This principle suggests that a good network structure should maximize the mutual information between inputs and outputs, or equivalently maximize the entropy of outputs under mild assumptions. We further establish connections between this principle and the theory of Bayesian optimal classification, and empirically verify that larger entropy of the outputs of a deep neural network indeed corresponds to a better classification accuracy. Then as an implementation of the principle, we show that sparse coding can effectively maximize the entropy of the output signals, and accordingly design an algorithm based on \emph{global group sparse coding} to automatically learn the inter-layer connection and determine the depth of a neural network. Our experiments on a public image classification dataset demonstrate that using the structure learned from scratch by our proposed algorithm, one can achieve a classification accuracy comparable to the best expert-designed structure (i.e., convolutional neural networks (CNN)). In addition, our proposed algorithm successfully discovers the local connectivity (corresponding to local receptive fields in CNN) and invariance structure (corresponding to pulling in CNN), as well as achieves a good tradeoff between marginal performance gain and network depth. All of this indicates the power of the efficient coding principle, and the effectiveness of automatic structure learning.
\end{abstract}

\section{Update in 2021}

This manuscript was written in January 2014 and once submitted to ICML 2014. Unfortunately, we did not continue this line of research and did not publish this article either. Today, we decide to publish it and expect the idea and empirical results can be helpful to those who would like to understand and investigate the problems such as why deep learning works. 

Since 2014, several related works emerge, among which the most close ones are the information bottleneck principle \cite{tishby2015deep} and maximum coding rate reduction principle \cite{chan2021redunet}. Both of these two principles are based on supervised learning which assumes the labels of data are known, while the principle of this paper investigate the structure learning through unsupervised learning. We assume the optimal structure of neural networks can be derived from the input features even without labels. Furthermore, let $x$ , $z$, $y$ indicate the input data, the learned representation and labels respectively. Information bottleneck principle \cite{tishby2015deep} aims to maximize the mutual information between $z$ and $y$, meanwhile minimize the mutual information between $x$ and $z$. However, the information maximization principle in this paper aims to maximize the mutual information between $x$ and $z$.  The maximum coding rate reduction principle \cite{chan2021redunet}, on the one hand, tries to maximize the mutual information between $x$ and $z$, which is essentially the same as information maximization principle. However, on the other hand, the $MCR^{2}$ principle also leverages the labels of data and minimize the volume of $z$ within each class. In linear model, information maximization principle leads to PCA (principal component analysis) while maximum coding rate reduction leads to LDA (linear discriminant analysis). 

\section{Introduction}
Recent years, people have witnessed a resurgence of neural networks in the machine learning community. Indeed, systems built on deep neural network techniques (DNN) demonstrate remarkable empirical performance in a wide range of applications. For examples, convolutional neural networks keeps the records in the ImageNet challenge ILSVRC 2012 \cite{krizhevsky_2012} and ILSVRC 2013 \cite{zeiler_2013}. The core of state-of-the-art speech recognition systems are also based on DNN techniques \cite{mohamed_2009,deng_2011}. In the applications to natural language processing, neural networks are making steady progress too \cite{collobert_2011,mikolov_2013}.

Empirical evidences for understanding why deep neural networks works so well are also accumulated. Besides the techniques such as dropout, local normalization, and data augmentation, the evidences suggest that the architecture or structure of neural networks plays a significant role in its success. For example, an early result reported by \cite{jarrett_2009} finds that, a two-layer architecture is always better than a single-layer one. More surprisingly, the paper observes that, given an appropriate structure, even random assignment of networks parameters can yield a decent performance. In addition, in the state-of-the-art systems such as \cite{krizhevsky_2012, zeiler_2013,mohamed_2009,deng_2011}, the network structure, in particularly, the inter-layer connection, the number of nodes in each layer, and the depth of the networks, are all designed by human experts in a very careful and probably painful way. This requires in-depth domain knowledge (e.g., the structure of convolutional neural networks \cite{lecun_1998} largely originates from the inspiration of biological nervous system \cite{hubel_1962,fukushima_1982}, and the network structure in \cite{deng_2011} heavily depend on the domain knowledge in speech recognition) or hundreds of times of trial-and-error \cite{jarrett_2009}.

Given this situation, a natural question arises: can we learn a good network structure for DNN from scratch in a fully automatic fashion? What is the principled way to achieve this? To the best of our knowledge, the studies on these important questions are still very limited. Only \cite{chen_2013}
shows the possibility of learning the number of nodes in each layer with a nonparametric Bayesian approach. However, there is still no attempt on the automatic learning of inter-layer connections and the depth of DNN. These are exactly the focus of our paper.

For this purpose, we borrow an important principle, called the efficient coding principle, from the domain of biological nervous systems, in which area there exists tremendous research work on understanding the structure of human brains. The principle basically says that a good structure (brain structure in their case and the structure of DNN in our case) forms an efficient internal representation of external environments \cite{barlow_1961,linsker_1988}. Rephrased by our familiar language, the principle suggests that the structure of a good network should match the statistical structure of input signals. In particular, it should maximize the mutual information between the inputs and outputs, or equivalently maximize the entropy of the output signals under mild assumptions. While the principle seems intuitive and a little informal, we show that it has a solid theoretical foundation in terms of Bayesian optimal classification, and thus has a strong connection with the optimality of the neural networks from the machine learning perspective. In particular, we first show that the principle suggests us to maximize the independence between the output signals. Then we notice that the top layer of any neural network is a \emph{softmax} linear classifier, and the independency between the nodes in the top hidden layer is a sufficient condition for the \emph{softmax} linear classifier to be the Bayesian optimal classifier. This theoretical foundation also provides us a clear way to determine the depth of the deep neural networks: if after multiple layers of non-linear transformations (learned under the guidelines of the efficient coding principle) the hidden nodes become statistically independent of each other, then there is no need to add another hidden layer (i.e., the depth of the network is finalized) since we have already been optimal in terms of the classification error.

We then investigate how to design structure learning algorithm based on the principle of efficient coding. We show that sparse coding can implement the principle under the assumption of zero-peaked and heavy-tailed prior distributions. Based on this discovery, we design an effective structure learning algorithm based on \emph{global group sparse coding}.  When customized for image analysis, we discuss how the proposed algorithm can learn inter-layer connections, handle invariance, and determine the depth.

We conduct a set of experiments on a widely used dataset for image classification. We have several interesting findings. First, although we have not imposed any prior knowledge onto the structure learning process, the DNN with our automatically learned structure can provide a very competitive classification accuracy, which is very close to the well-tuned CNN model that is designed by human experts. Second, our algorithm can automatically discover the local connection structure, simply due to the match of the statistical structure of input signals. Third, we notice that the pooling operation specifically designed in CNN can also be automatically implemented by our learning algorithm based on group sparse coding. All these results demonstrate the power of automatic structure learning based on the efficient coding principle.

While our work is just a preliminary step towards automatic structure learning, we have seen very positive signs suggesting that structure learning could be an important direction to better understand DNN, to further improve the performance of DNN, and to generalize the application scope of DNN based learning algorithms.

\section{The Principle for Structure Learning}
The key of unsupervised structure learning for DNN is to adopt an appropriate principle to guide the procedure
of structure learning. In this section, we describe our used principle and discuss its advantages for structure learning.

\begin{figure}
\centering
\subfigure[]{
\includegraphics[bb = 60 5 580 467,width=0.22\textwidth]{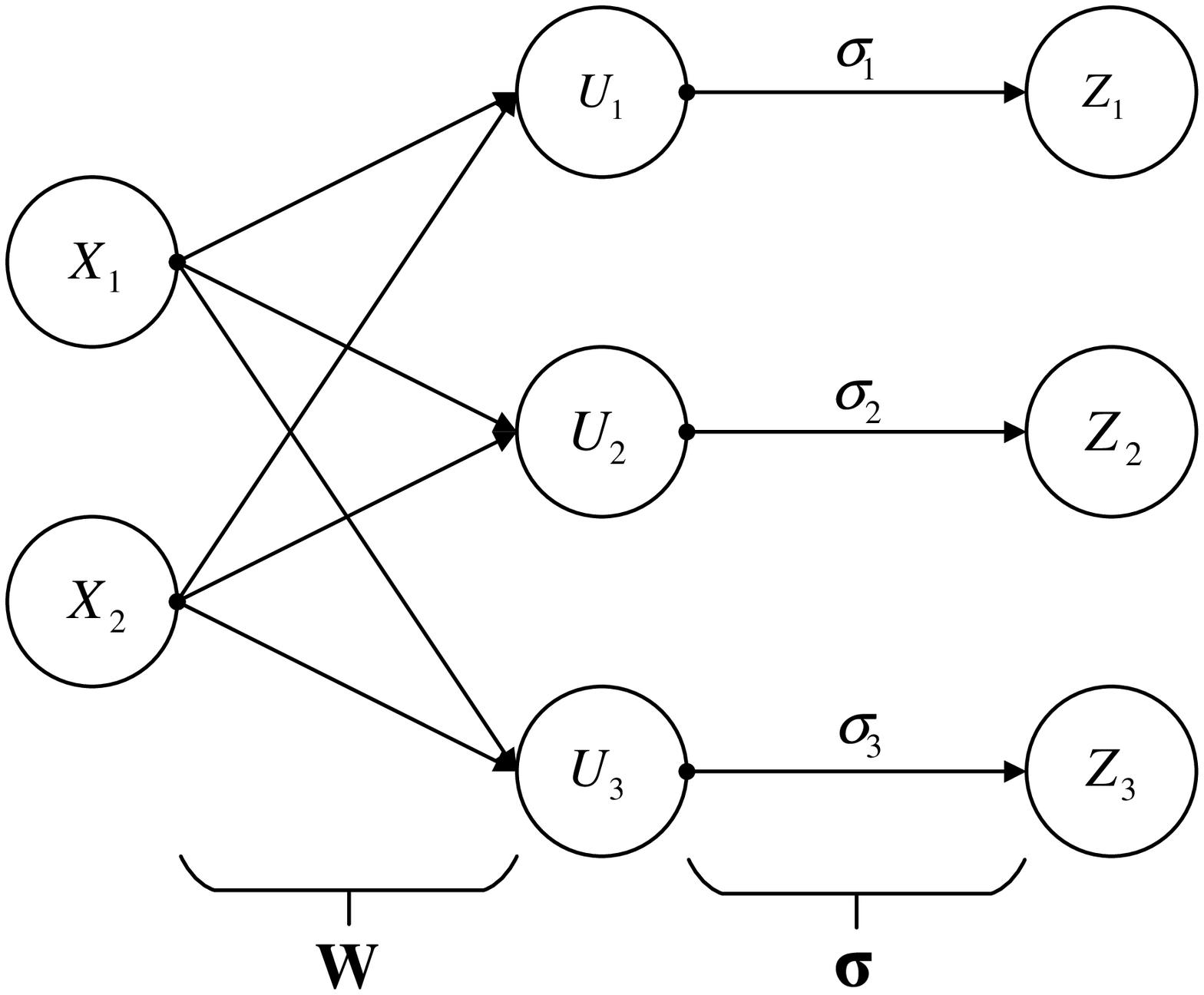}
\label{efficient_coding:sub1}
}
\hspace{0.03\textwidth}
\subfigure[]{
\includegraphics[bb = 95 325 439 739, width=0.18\textwidth]{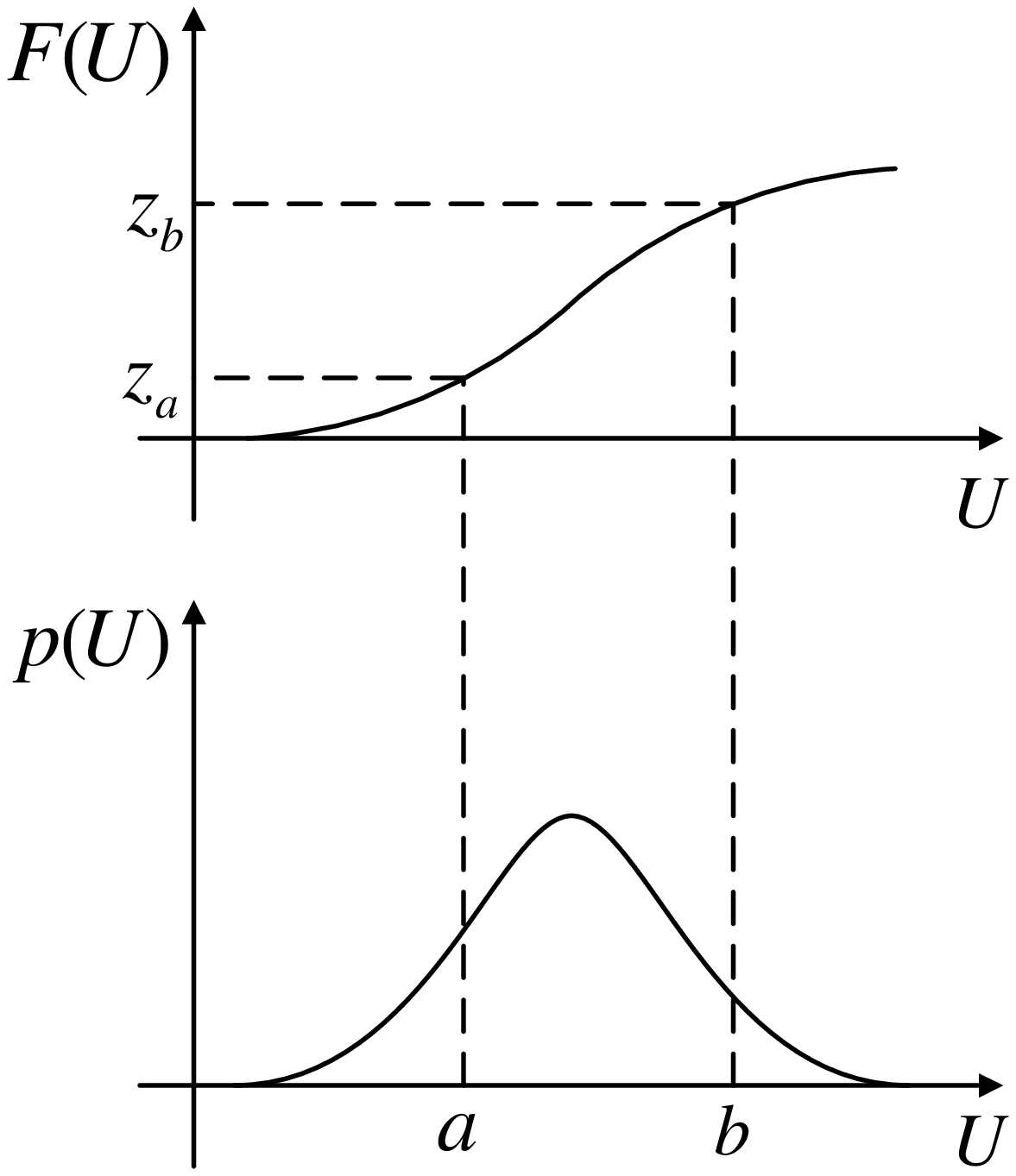}
\label{effcient_coding:sub2}
}
\caption{Fig. (a) shows an pipeline of linear and nonlinear transformation; Fig. (b)
shows an example of the cumulated distribution function (CDF) transformation, which can map
arbitrary distribution into a uniform distribution in $[0,1]$. }
\label{efficient_coding}
\end{figure}
To guide the structure learning for deep neural networks, we borrow a principle from computational neuroscience. In fact, various hypotheses have been proposed to understand the magic structure of biological nervous system in the literature. The core problem under investigation include: what is the goal of sensory coding or what type of neuronal representation is optimal in an ecological sense? Of all the attempts on answering this question, the principles rooted in information theory have been proved to be successful.

For ease of illustration of these principles, we give some notations first. Figure \ref{efficient_coding:sub1} shows how the data go through one layer of a typical neural network.
The input $\mathbf{X}$ is firstly processed by a linear transformation $\mathbf{W}$ and followed by a
component-wise transformation $\mathbf{\sigma}$, in which each component is usually a bounded,
invertible nonlinear function. That is,
\begin{equation}
Z_{i}=\sigma_{i}(U_{i})=\sigma_{i}(\mathbf{W}_{i}^T{\mathbf{X}})
\end{equation}
Without loss of generality, the range of $\sigma_{i}$ is usually assumed $[0,1]$. $\mathbf{Z}$
is the neuronal representation or a coding of the input $\mathbf{X}$.

Among the information-theoretic principles proposed in the literature, the \emph{redundancy reduction principle} developed by Barlow \cite{barlow_1961} has been at the origin of many theoretical and experimental studies. Let $I(\mathbf{Z})$ denote the mutual information between output units. Barlow's theory proposes that, the output of each unit should be statistically independent from each other in an optimal neural coding. That is, the objective function is the minimization of $I(\mathbf{Z})$. Another principle developed by Linsker \cite{linsker_1988} advocates that the system should maximize the amount of information that the output conveys about the input signal, that is, maximizing $I(\mathbf{X};\mathbf{Z})$.
As shown in the following theorem, the aforementioned principles are actually equivalent to each other under certain conditions.
\begin{theorem}
\label{theorem:mmi_me}
Let the component-wise nonlinear transfer function $\mathbf{\sigma}_{i}$ be the cumulated distribution
function (CDF) of $U_{i}$, minimizing $I(\mathbf{Z})$ is equivalent to maximizing $I(\mathbf{X};\mathbf{Z})$.
\end{theorem}
A sketch proof is given in supplementary material. The theory indicates that for bounded output neural networks, minimizing the mutual information between outputs is equivalent to maximizing the mutual information between inputs and outputs. Due to this equivalence, we will not distinguish the two principles, and will uniformly refer to them as ``efficient coding principles''.

Since the efficient coding principle is rooted in computational neuroscience and information theory, one may wonder whether it really ensures optimal neural network structures from the machine learning perspective. Through the following theorem we show that this principle has a strong theoretical connection with pattern classification tasks.
\begin{theorem}
\cite{minsky_1961} With (conditional) independent features, linear classifier is optimal in the sense of minimum Bayesian error.
\end{theorem}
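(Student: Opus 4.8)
The plan is to show that under class-conditional independence of the features, the Bayes optimal decision rule coincides with a maximum over \emph{affine} discriminant functions of the features, which is exactly the computation performed by the top softmax layer of the network. Write $Y$ for the class label, taking values in $\{1,\dots,K\}$, and $\mathbf{Z}=(Z_1,\dots,Z_d)$ for the feature (top hidden layer) vector. The Bayes optimal classifier outputs $\hat{y}(\mathbf{Z}) = \arg\max_{y} P(Y=y\mid\mathbf{Z})$. By Bayes' rule, $P(Y=y\mid\mathbf{Z}) = P(Y=y)\,P(\mathbf{Z}\mid Y=y)/P(\mathbf{Z})$, and since the evidence $P(\mathbf{Z})$ does not depend on $y$, the rule is unchanged if we instead maximize $g_y(\mathbf{Z}) := \log P(Y=y) + \log P(\mathbf{Z}\mid Y=y)$.

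The second step is to invoke conditional independence, $P(\mathbf{Z}\mid Y=y) = \prod_{i=1}^{d} P(Z_i\mid Y=y)$, which gives $g_y(\mathbf{Z}) = \log P(Y=y) + \sum_{i=1}^{d}\log P(Z_i\mid Y=y)$, so the discriminant splits into a class-prior term plus a sum of per-coordinate log-likelihoods. The key observation is that each term $\log P(Z_i\mid Y=y)$ is affine in $Z_i$: for binary features $Z_i\in\{0,1\}$ with $\theta_{iy} := P(Z_i=1\mid Y=y)$ one has $\log P(Z_i\mid Y=y) = Z_i\,\log\frac{\theta_{iy}}{1-\theta_{iy}} + \log(1-\theta_{iy})$, and more generally this holds whenever the class-conditional law of $Z_i$ belongs to a one-parameter exponential family with $Z_i$ itself as the sufficient statistic (e.g.\ Gaussians with a class-independent variance), since then $\log P(Z_i\mid Y=y) = \eta_{iy} Z_i - A(\eta_{iy}) + c(Z_i)$ with the $c(Z_i)$ part not depending on $y$. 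Collecting coefficients, $g_y(\mathbf{Z}) = b_y + \mathbf{w}_y^{\top}\mathbf{Z} + r(\mathbf{Z})$ where $r(\mathbf{Z})$ is common to all classes, so the Bayes rule reduces to $\hat{y} = \arg\max_y\,(b_y + \mathbf{w}_y^{\top}\mathbf{Z})$ --- a linear classifier. Equivalently, normalizing $\exp(g_y(\mathbf{Z}))$ over $y$ recovers the exact posteriors, so a softmax layer with weights $\mathbf{w}_y$ and biases $b_y$ is literally Bayes optimal.

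I expect the main obstacle to be precisely the affinity claim above: for an arbitrary continuous feature distribution, $\log P(Z_i\mid Y=y)$ need not be affine in $Z_i$, in which case the optimal discriminant is additive in the coordinates but not linear. This is why the statement, following Minsky, is cleanest for binary features and, in the continuous case, must be read together with a mild distributional assumption on the class-conditionals (membership in the appropriate exponential family, which subsumes the common Gaussian-with-shared-covariance model). The remaining details --- measurability of the $\arg\max$, an arbitrary tie-breaking rule on the decision boundary (typically a measure-zero set), and the passage from ``the Bayes decision rule is linear'' to ``a linear/softmax classifier attains the minimum Bayesian error'' --- are standard.
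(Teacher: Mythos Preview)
The paper does not actually prove this theorem; it is quoted as a classical result with a citation to \cite{minsky_1961} and used only as motivation for why independence in the top hidden layer is a desirable structural target. So there is no ``paper's own proof'' to compare against, and your derivation is the standard one that Minsky's result rests on.

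That said, your self-diagnosis of the obstacle is exactly right and worth flagging as more than a technicality. Class-conditional independence alone yields only that the Bayes discriminant is \emph{additive} in the coordinates, $g_y(\mathbf{Z})=\log P(Y=y)+\sum_i \ell_{iy}(Z_i)$; promoting additivity to \emph{linearity} in $\mathbf{Z}$ genuinely requires the extra assumption that each $\ell_{iy}(z)=\log P(Z_i=z\mid Y=y)$ is affine in $z$ up to a class-independent term, i.e.\ that the class-conditionals live in a one-parameter exponential family with $Z_i$ as the sufficient statistic (binary features, or Gaussians with shared variance, being the canonical cases). The paper's informal statement elides this, but your write-up should not: state the theorem for binary features as Minsky did, note the exponential-family extension, and remark that without it the correct conclusion is ``a generalized additive classifier is Bayes optimal,'' which is still enough to justify a single softmax layer only after a coordinate-wise nonlinearity. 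Everything else in your plan (dropping $P(\mathbf{Z})$, tie-breaking, the softmax recovering the exact posteriors) is routine.
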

Actually, this theorems has its particular implication in the context of structure learning for deep neural networks. As we know, no matter how a deep neural network is structured, its top layer is always a \emph{softmax} linear classifier. Then according to the above theorem, if we can achieve the independency between the nodes in the top hidden layer by means of structure learning, then it will ensure (as a sufficient condition) that the \emph{softmax} linear classifier would be the Bayesian optimal classifier. In other words, there is no need to adopt more complicated non-linear classifiers at all. In this sense, this theoretical foundation also provides us a clear way to determine the depth of the deep neural networks: if after multiple layers of non-linear transformations (learned under the guidelines of the efficient coding principle) the hidden nodes become statistically independent of each other, then we should stop growing the depth of the neural networks since we have already been optimal in terms of the possibly best classification error we could ever get.

\begin{figure}
\centering
\includegraphics[width=0.45\textwidth]{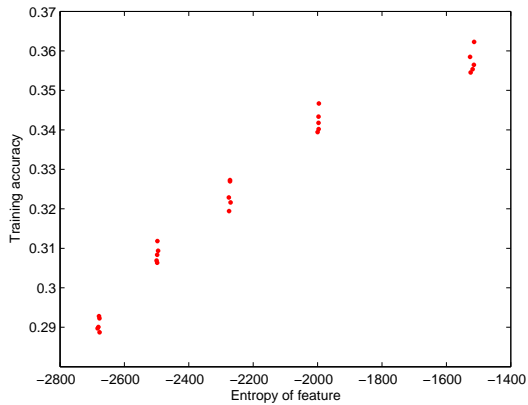}
\caption{The correlation between feature entropy and classification accuracy.}
\label{entropy_accuracy}
\end{figure}

\begin{figure*}
\centering
\subfigure[]{
\includegraphics[width=0.45\textwidth]{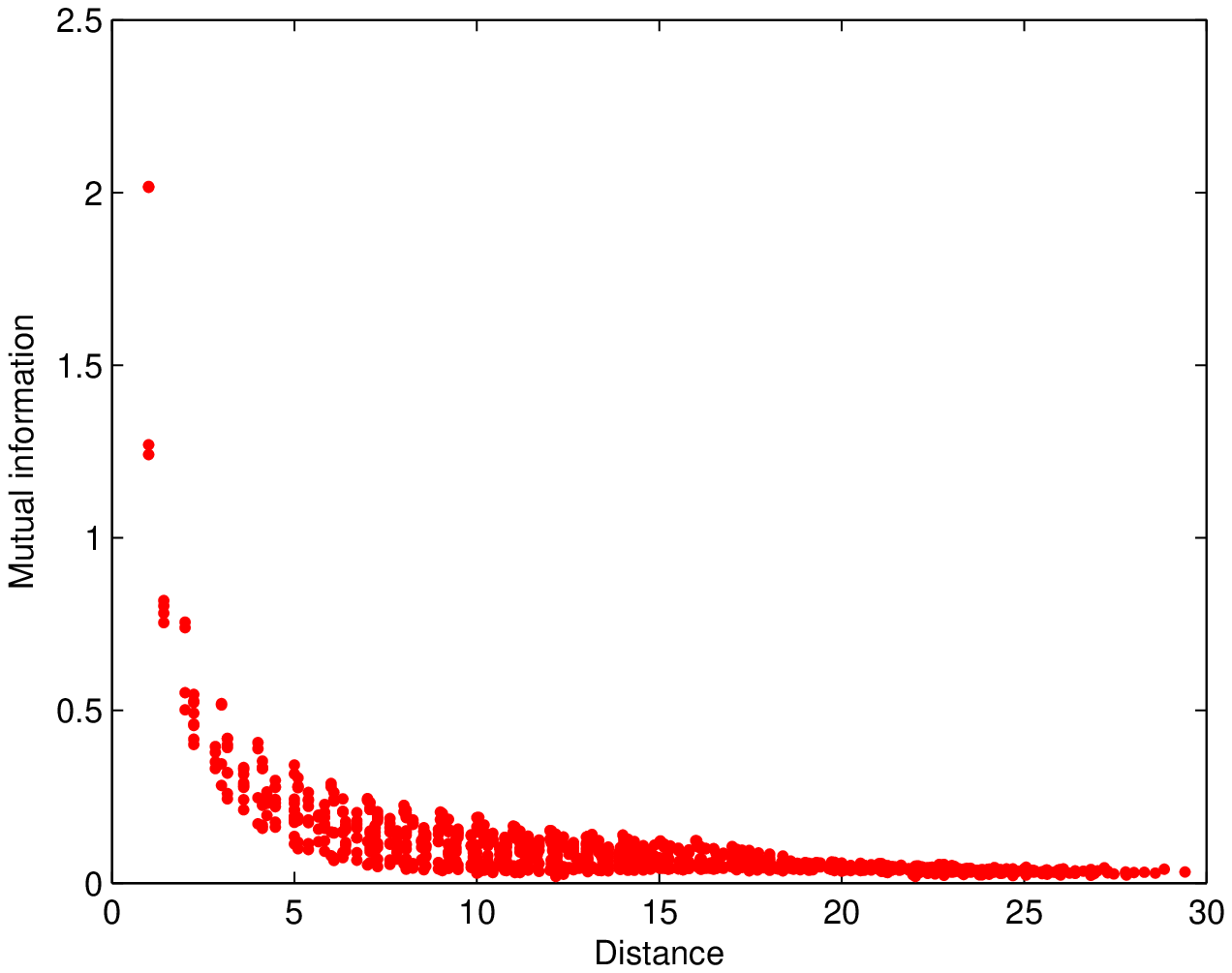}
\label{spatial_mi:pixel}
}
\hspace{0.03\textwidth}
\subfigure[]{
\includegraphics[width=0.45\textwidth]{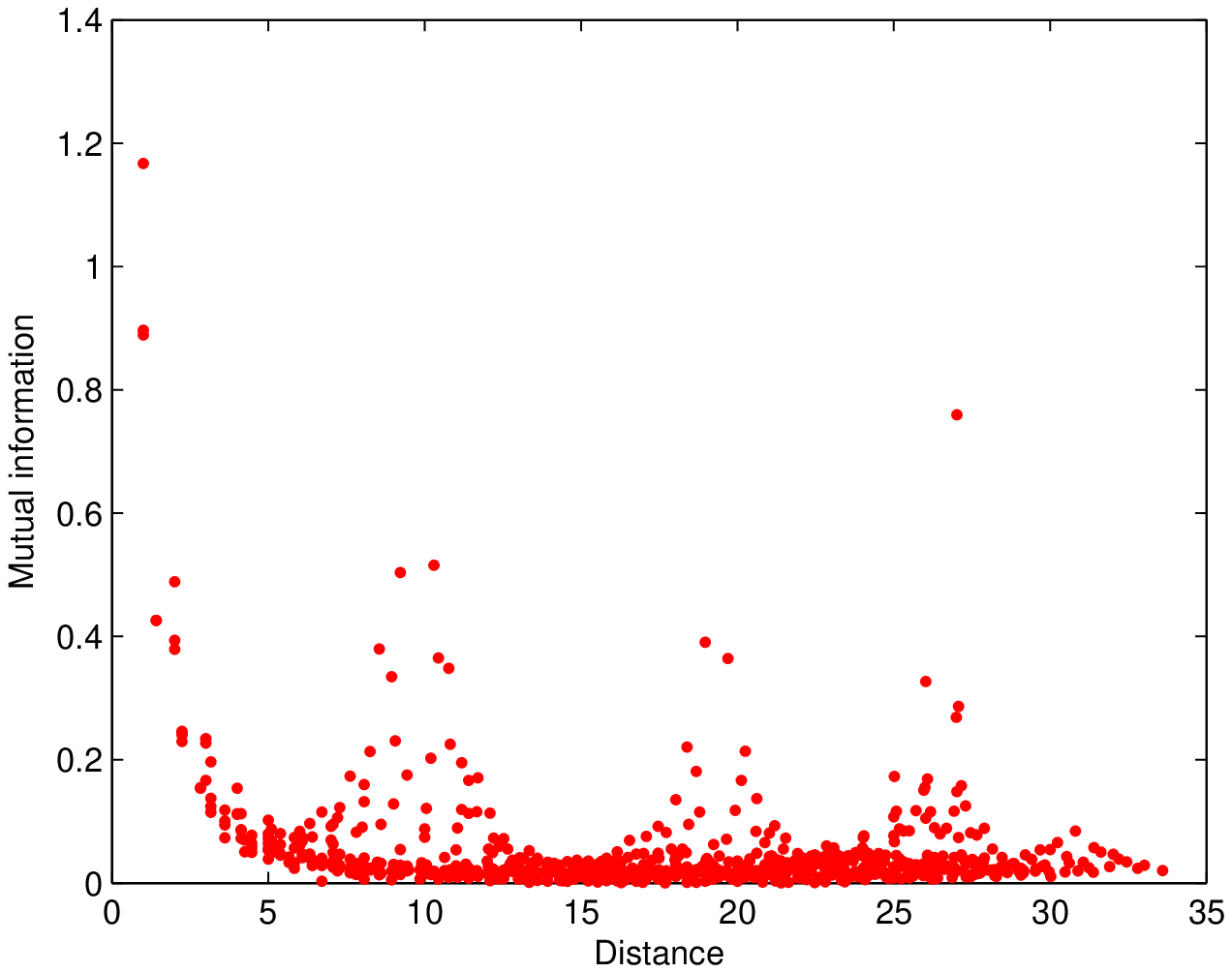}
\label{spatial_mi:edge}
}
\caption{Fig. (a) The mutual information between pixels decays with the increasing of the spatial distance between pixels; Fig. (b) The similar decay behavior can be observed
when each location is represented as an edge direction. This figure demonstrates that, with pixel representation, two locations seems independent if the spatial distance exceeds 10. However, after extracting
edge directions at each location and represent the location with the edge direction, two locations are no longer seemingly independent. Such observation implies the redundancies emerge with more abstract representation.}
\label{spatial_mi}
\end{figure*}
\section{Empirical Observations}
In this section, we show some empirical studies regarding the efficient coding principle. For this purpose, we need to compute the coding efficiency (or information gain) provided by a layer of neural network (in terms of the change of the mutual information). Please refer to the Appendix for the method we used.
%

\subsection{Coding Efficiency and Classification Accuracy}
\label{chapter:entropy_accuracy}
We generate a set of random structures and use them to extract features from whitened CIFAR-10 data set.
The coding efficiency of the extracted features and the corresponding classification on training set are
evaluated \footnote{We deliberately use training set because we want to know the relation between fitting quality and coding efficiency}.
Figure \ref{entropy_accuracy} demonstrates the relation between feature entropy and classification accuracy
of softmax classifier. A positive correlation between entropy and classification accuracy can be observed.
Noting that the efficient coding principle is an unsupervised objective function, this positive correlation is somewhat surprising.
%

\subsection{Spatial Redundancy of Images}
Figure \ref{spatial_mi} shows the redundancy properties of natural images. The result is obtained at
whitened CIFAR-10 data. The large value of mutual information between nearby elements indicates
the redundancy of information. In both figures, we can observe the decay behavior of mutual
information with the increasing of spatial distance. This suggests the elements sufficiently
far-away from each other are nearly independent. This is not surprising, as the phenomenon is
already widely formulated by Markov assumption in various probabilistic model.
Interestingly, Figure \ref{spatial_mi:edge} which shows that, after feature extraction by edge detector,
the redundancies between nearby pixels are removed, however, new dependencies among edges emerge.
The dependencies between edges spread much broader than that of pixels. This suggests that a single
layer transformation is not sufficient for the purpose of redundancy reduction.
We need another transformation to remove the
redundancies between edge representations.

\subsection{Multi-layer Redundancy Reduction}
Figure \ref{entropy_depth} illustrates entropy increasing after adding more layers of transformation. We train several layers of Gaussian-binary RBM on contrast normalized CIFAR-10 and
several layers of sparse coding on whitened CIFAR-10 \footnote{Gaussian-binary RBM assumes a Gaussian distribution of
the visual data, while sparse coding as an implementation of ICA assumes non-Gaussian data}. We can observe:
(1), For both RBM and sparse coding, an additional layer bring further entropy gain though the marginal gain vanishes
with the number of layers. (2), the sparse coding produce features with much higher entropy than
RBM because of sparse coding is more dedicated to the goal of redundancy reduction.

\begin{figure}
\centering
\includegraphics[width=0.45\textwidth]{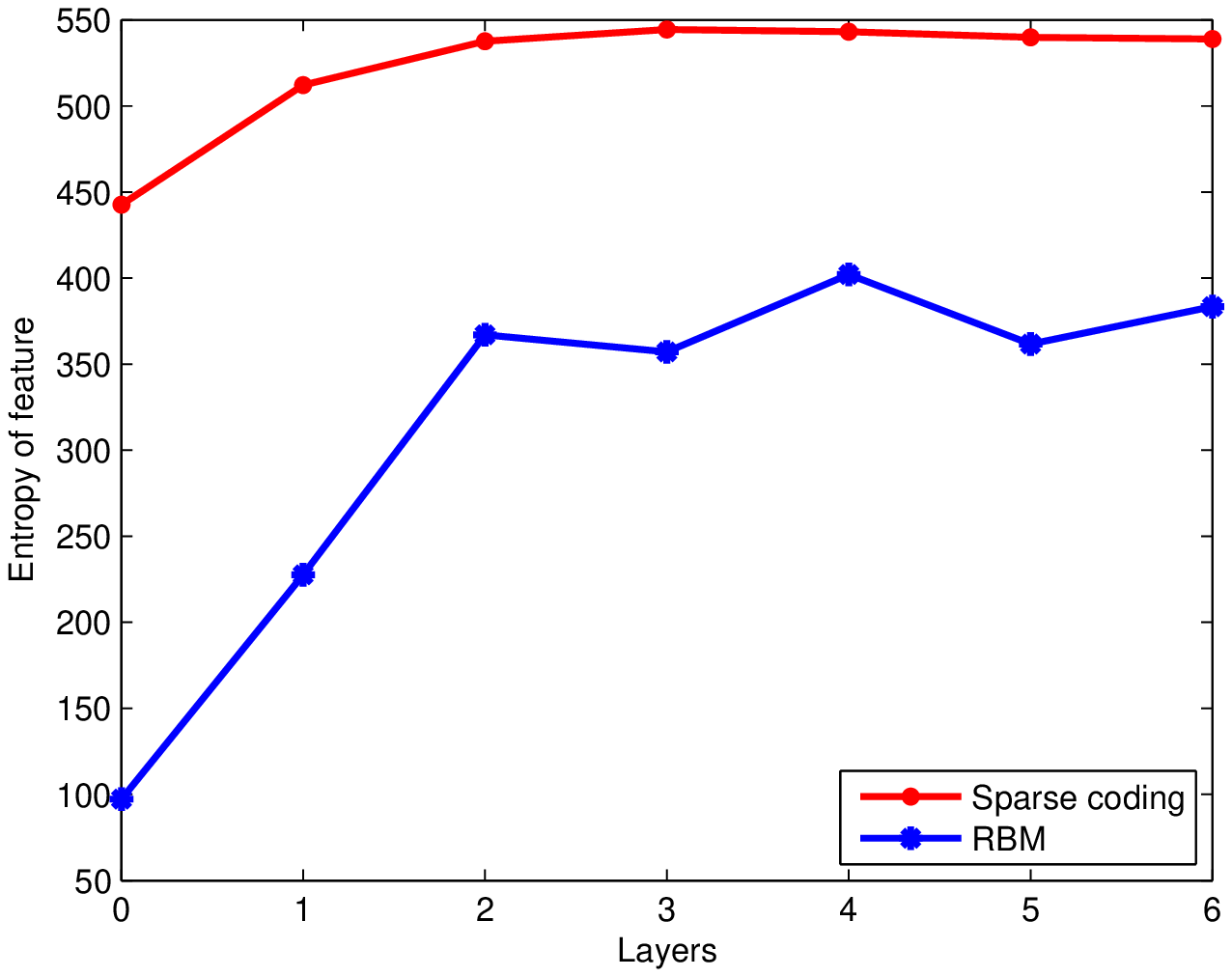}
\caption{With RBM and sparse coding, adding more layer always help remove
redundancy further. However, the gain of redundancy reduction gradually becomes marginal. }
\label{entropy_depth}
\end{figure}

\section{Algorithms}
\label{section:algorithm}
In this section, we propose algorithms for structure learning based on the principle of efficient coding.

\subsection{Implementing the Principle by Sparse Coding}
The transformation in Figure \ref{efficient_coding:sub1} shows how to get $\mathbf{U}$ from $\mathbf{X}$
through $\mathbf{U}=\mathbf{W}^{T}\mathbf{X}$. Inversely, $\mathbf{X}$ can be viewed as being generated
by a probabilistic process governed by $\mathbf{U}$, such as
\begin{equation}
\mathbf{X}=\mathbf{D}^{T}\mathbf{U}+\mathbf{N}
\end{equation}where $\mathbf{N}$ is a vector of independently and identically distributed Gaussian noise, $\mathbf{D}$ describes how each source signal
generates an observation vector. If the dimensions of $\mathbf{X}$ and $\mathbf{U}$ are equal and transformation matrix $\mathbf{W}$ is full rank,
a trivial relation between $\mathbf{V}$ and $\mathbf{W}$ is $\mathbf{W}^{T}\mathbf{D}=\mathbf{I}$, where $\mathbf{I}$ is an identify matrix. In this way,
learning optimal data transformation matrix $\mathbf{W}$ is equivalent to inferring the optimal data generating matrix $\mathbf{D}$.

Usually, to infer the simplest possible signal formation process, some additional assumptions on the prior
distribution of $\mathbf{U}$ are made. As the efficient coding principle indicates,  the independent or factorial codes are preferred, that is, assuming
\begin{equation}
p(\mathbf{U})=\prod_{i}p(U_{i}).
\end{equation}
When $p(U_{i})$ is a distribution peaked at zero with heavy tails, it can be shown the model leads to the so-called sparse coding \cite{field_1994}. Therefore our structure learning algorithm described in the next subsection is based on sparse coding\footnote{We note that both ICA and sparse coding can implement the efficient coding principle. The former adds assumption on the CDF of $U_{i}$
while the latter adds assumption on the probability density function of $U_{i}$. However, ICA is difficult to be extended to over-complete case, and we do not use it in this work.
}.

\begin{figure}
\centering
\subfigure[]{
\includegraphics[width=0.15\textwidth]{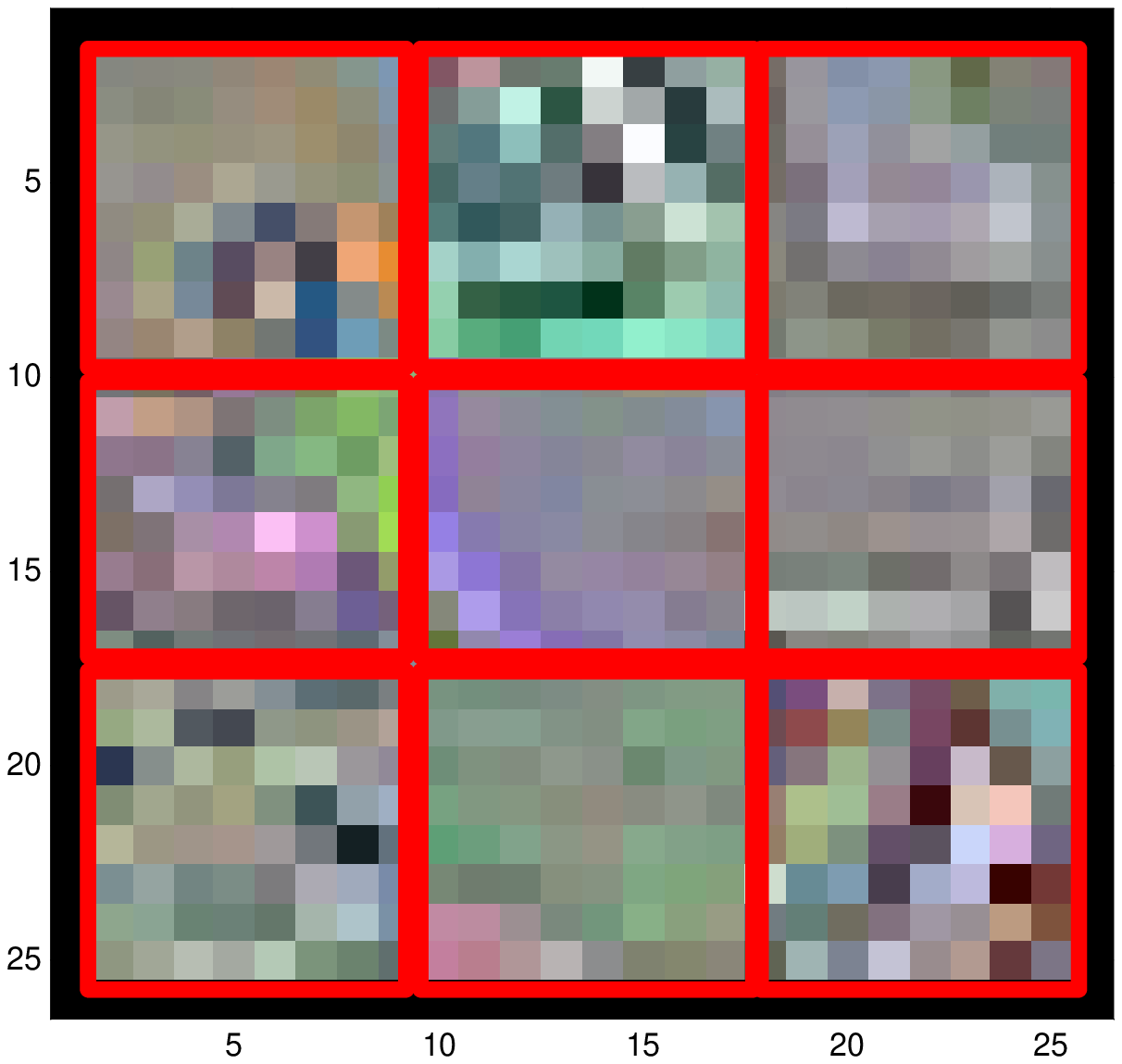}
\label{block_toy:sub1}
}
\subfigure[]{
\includegraphics[width=0.25\textwidth]{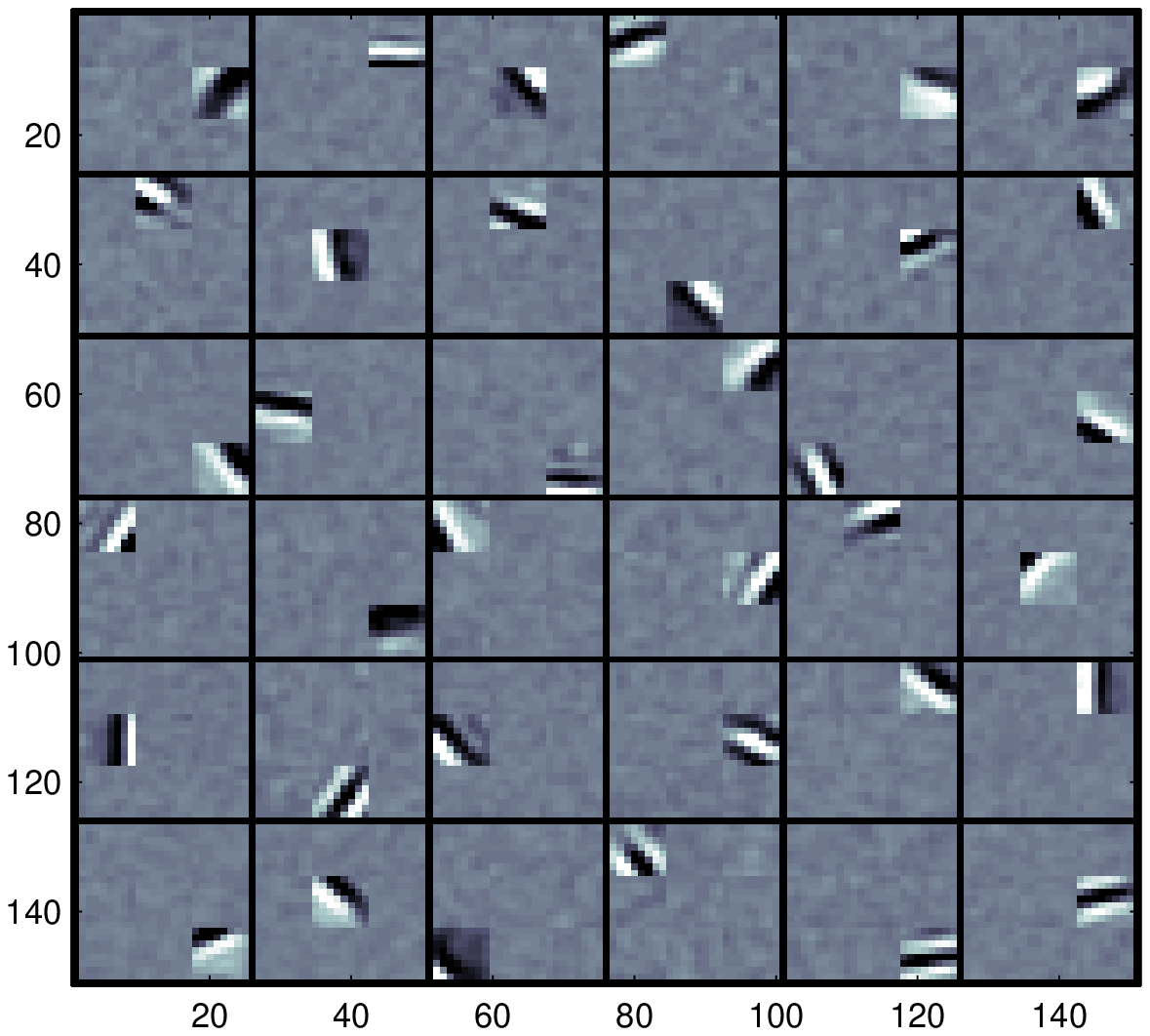}
\label{block_toy:sub2}
}
\caption{(a) An example of synthesize images: each image is composed of 9 blocks, and
each block is sampled from a random patch of a random image. (b) 36 basis
learned on the synthesized image set by sparse coding. Apparently, each basis only focuses
on one block of the input image. }
\label{block_toy}
\end{figure}

We construct a synthesized data set from whitened CIFAR-10. Each image consists of $3 \times 3$ blocks,
each of which is sampled from a random location of a random image. We can assume
the pixels from different blocks are independent from each other, while the pixels in the same block
possess the statistical properties of natural images. An ideal algorithm should discover
and converge to the correct structure. That is, each basis is only connected to pixels in a single
block. We evaluated RBM, sparse auto-encoder, and sparse coding. Sparse coding (see Fig. \ref{block_toy}) is the only one that can perfectly recover the block structures.

\subsection{The Proposed Algorithms}

Algorithm \ref{alg:structure_learning} shows how to learn the structure from unsupervised data.
As shown in the algorithm, we learn the structure layer by layer in a bottom-up manner.
We put the raw features at layer $1$ and learn the connection between layer $1$ and layer $2$
given the predefined number\footnote{As aforementioned, we focus on inter-layer connection
learning and depth learning in this work and assume the number of nodes of each layer is given
in advance. One can also leverage the technique proposed in \cite{chen_2013} to automatically
learning the number of nodes in each layer.} of nodes in layer $2$. Specifically, the inter-layer
connection is initialized with full connections. Trained on unlabeled data, due to the ICA
properties of sparse coding, the inter-layer connection will converge to a sparse connected
network. In other words, the weights of most of the edges will converge to zero during the
learning process. After the connection between layer $k$ and $k+1$ has been learnt, we can
estimate the entropy of layer $k+1$ according to Equation \ref{equation:ee_knn} and compare
it with that of layer $k$. If the entropy gain between the two layers is smaller than
a threshold $\epsilon$, we terminate the learning process; otherwise we add layer $k+2$ and
continue to learning process. In other words, the depth of the network is determined according to the cues of entropy gain.

Algorithm \ref{alg:bp_structure_priming} shows how to learn a better DNN based on the structure
output by Algorithm \ref{alg:structure_learning} and supervised data.  First, the learned
sparse connection and the weights are used to initialize a multi-layer feed-forward network.
The inter-layer connection inherits from the structure mask learned in Algorithm \ref{alg:structure_learning},
and the weights of connection is initialized by the weights learned by sparse coding.
Then via training on labeled data, the weights\footnote{The structure will be fixed} are fine-tuned further.

While the two algorithms look quite similar to several existing algorithms at the first glance,
we would like to highlight and elaborate several differences and some implementation details as follows.
\begin{algorithm}[tb]
   \caption{Structure Learning with Sparse Coding}
   \label{alg:structure_learning}
\begin{algorithmic}[1]
   \STATE {\bfseries Input:} $\mathbf{X}$, each column is an example $\mathbf{X}_{i}\in \mathcal{R}^{d}$
   \STATE {\bfseries Output:} dictionary $\mathbf{D}_{k}$, structure mask $\mathbf{M}_{k}$, depth $K$
   \STATE Initialize $k=0$, $\mathbf{U}_{k}=\mathbf{X}$
   \REPEAT
   \STATE Whiten $\mathbf{U}_{k}$
   \STATE $\mathbf{D}_{k},\!\mathbf{U}_{k+1}\!\!=\!\!\underset{\mathbf{D}_{k},\mathbf{U}_{k+1}}{\argmin} \|\mathbf{U}_{k}-\mathbf{D}_{k}^{T}\mathbf{U}_{k+1}\|_{2}\!+\!\lambda \|\mathbf{U}_{k+1}\|_{1}$
   \STATE Calculate $\mathbf{M}_{k}$ by thresholding $\mathbf{D}_{k}$
   \STATE $\mathbf{Z}_{k+1}=\sigma(\mathbf{U}_{k+1})$, where $\sigma$ is feature-wise CDF
   \STATE Estimate $H(\mathbf{Z}_{k+1})$
   \STATE $k=k+1$
   \UNTIL{$|\frac{H(\mathbf{Z}_{k+1})-H(\mathbf{Z}_{k})}{H(\mathbf{Z}_{k})}| < \epsilon$}
   \STATE $K = k+1$
\end{algorithmic}
\end{algorithm}
\begin{algorithm}[tb]
   \caption{Back-propagation with Structure Priming}
   \label{alg:bp_structure_priming}
\begin{algorithmic}[1]
   \STATE {\bfseries Input:} $\mathbf{X}$, $\mathbf{D}_{k}$, $\mathbf{M}_{k}$, $k=1,\ldots,K$
   \STATE {\bfseries Output:} $\mathbf{W}_{k}$, $k=1,\ldots,K$
   \STATE $\mathbf{U}_{0}=\mathbf{X}$
   \STATE $\mathbf{W}_{k}=\mathbf{M}_{k}\circ\mathbf{D}_{k}$ for $k=1,\cdots,K$
   \STATE /* Forward pass */
   \FOR {$k=0$ {\bfseries to} $K$}
   \STATE $\mathbf{U}_{k+1}=\mathbf{W}_{k}^{T}\mathbf{U}_{k}$
   \STATE $[\mathbf{U}_{k+1}]_{i}=\mathrm{sign}([\mathbf{U}_{k+1}]_{i})(|[\mathbf{U}_{k+1}]_{i}|-\lambda)_{+}$
   \ENDFOR
   \STATE /* Back-propagation */
   \STATE $\delta \mathbf{U}_{K}=\frac{\partial \mathrm{Loss}}{\partial \mathbf{U}_{K}}$
   \FOR {$k=K$ {\bfseries down to} $1$}
   \STATE $[\delta \mathbf{U}_{k}]_{i} = 0$ if $[\mathbf{U}_{k}]_{i} = 0$
   \STATE $\delta \mathbf{U}_{k-1} = \mathbf{W}_{k-1}\mathbf{U}_{k}$
   \STATE $\delta \mathbf{W}_{k-1} = \mathbf{U}_{k-1}\delta\mathbf{U}_{k}^{T}$
   \STATE $\delta \mathbf{W}_{k-1} = \mathbf{M}_{k-1} \circ \delta\mathbf{W}_{k-1}$
   \STATE $\mathbf{W}_{k-1} = \mathbf{W}_{k-1} - \gamma \delta \mathbf{W}_{k-1}$
   \ENDFOR
\end{algorithmic}
\end{algorithm}
\paragraph{Training sparse coding in global range} The dramatic difference between the usage
of sparse coding in this paper and that of existing work is that, the sparse coding is trained
in global range. Take image as an example, here we train sparse coding on the whole image instead
of traditional way which trains sparse coding on small patches \cite{yang_2009}. Therefore,
we do not need to predefine the inter-layer connection such as the spatial range of local
connection in convolutional networks. Instead, the algorithm itself is able to learn the
optimal inter-layer connections. We will see in Section \ref{section:experiments},
the inter-layer connections learned on images happens to resemble the local
connection structure in CNN. We notice that deconvolutional network proposed
in \cite{zeiler_2010} also trains sparse coding on whole image; however, like
the patch-based sparse coding, it also needs a pre-determined spatial range for
convolutional filter.
\paragraph{Sparsifying inter-layer connection} Once obtaining $\mathbf{D}_{k}$
by Algorithm \ref{alg:structure_learning}, we know the strength of each connection
between adjacent-layer neurons.
Intuitively, the weak connection can be removed without significantly affect
the behavior of network. Concretely, a binary mask matrix $\mathbf{M}_{k}$ is calculated by thresholding $\mathbf{D}_{k}$
\begin{equation}
[\mathbf{M}_{k}]_{i}=\left\{\begin{array}{rl}
1, & |[\mathbf{D}_{k}]_i| \geq t\\
0, & \mathrm{otherwise}
\end{array}
\right.
\end{equation}
The parameter $t$ can be chosen according to what density of the mask matrix
$\mathbf{M}_{k}$ we expect. For example, if we want to keep $10\%$ connections
in the network, we can calculate the histogram of the absolute value in $\mathbf{D}_{k}$
and choose $t$ at the position of $10\%$ quantile. The resulting $\mathbf{M}_{k}$
will act as a structure priming for the feed-forward network in Algorithm \ref{alg:bp_structure_priming}.

\paragraph{Handling invariance} In tasks such as image classification, invariance
based on pooling is crucial for practical performance. The output of neurons
who have similar receptive fields are aggregated
through an OR operation to obtain shift invariance. To endow the algorithm with
this capability, the neurons should firstly be separated into overlapping or
non-overlapping groups according to their selectivities. Since Algorithm \ref{alg:structure_learning}
is based the sparse coding framework, it can be easily modified to handle
invariance using group lasso\footnote{ Similar ideas appear in \cite{hyvarinen_tica_2001,le_icml_2012}.} \cite{yuan_2006}. With group lasso, the dictionary learning becomes
\begin{equation}
\mathbf{D}_{k}=\underset{\mathbf{D}_{k}}{\argmin} \|\mathbf{U}_{k}\!-\!\mathbf{D}_{k}^{T}\mathbf{A}\|_{2}\!+\!\lambda \sum_{n}\!\sum_{g} \|\mathbf{A}_{g}^{n}\|_{2}
\end{equation}
where $\mathbf{A}_{g}^{n}$ denotes the reconstruction coefficients of the $n$-th
example in the $g$-th group. Also, the shrinkage operation in FISTA \cite{beck_2009} algorithm will change accordingly.


\paragraph{Back-propagation with structure priming} Both $\mathbf{M}_{k}$ and $\mathbf{D}_{k}$
provide structure priming for feed-forward network. The weights $\mathbf{W}_{k}$ are initialized as $\mathbf{M}_{k}\circ\mathbf{D}_{k}$ in feed-forward pass, and $\mathbf{M}_{k}$ also mask the gradient $\delta \mathbf{W}_{k-1}$ in back-propagation by
\begin{equation}
\delta \mathbf{W}_{k-1} = \mathbf{M}_{k-1} \circ \mathbf{W}_{k-1}
\end{equation}
where $\circ$ denotes Hadamard product. This implementation is the same as DropConnect approach in
\cite{wan_2013}. However, in DropConnect, the mask matrix is randomly generated and also it only used
in full connection layer.
\paragraph{One step ISTA approximation} Besides using the dictionary $\mathbf{D}_{k}$ to initialize
$\mathbf{W}_{k}$, we also would like the feature map to inherit the sparse properties from sparse coding.
Therefore, we implement the feed-forward transformation as a one-step ISTA approximation to the
solution of lasso. The idea comes from \cite{gregor-2010} which uses several steps to obtain an
efficient approximation to lasso. The shrinkage operation for standard sparse coding and group sparse coding are
\setlength{\arraycolsep}{2pt}
\begin{eqnarray}
[\mathbf{U}_{k+1}]_{i}&=&\mathrm{sign}([\mathbf{U}_{k+1}]_{i})(|[\mathbf{U}_{k+1}]_{i}|-\lambda)_{+}\\
{}[\mathbf{U}_{k+1}]_{i}&=&\mathrm{sign}([\mathbf{U}_{k+1}]_{i})(1-\frac{\lambda}{\|[\mathbf{U}_{k+1}]_{g}^{i}\|_{2}})_{+}
\end{eqnarray}
\setlength{\arraycolsep}{5pt}where $\|[\mathbf{U}_{k+1}]_{g}^{i}\|_{2}$ indicates the $\ell$-2
norm of the group variables $[\mathbf{U}_{k+1}]_{i}$ belongs to. Note that the nonlinear transfer
function used in feed-forward network is different from the transfer function used for estimating
entropy $H(\mathbf{Z}_{k})$ where the transfer function the CDF of feature map.

\paragraph{CUDA implementation of sparse coding} Training a sparse coding model on whole images
instead of patches is much demanding to computation resource. In our experiments, even the highly
optimized sparse modeling package SPAMS \cite{marial_2010} requires several days to convergence.
Regarding that GPGPU has been a common option for deep learning algorithm, we implement a CUDA
sparse coding based on FISTA algorithm. As noted in \cite{gregor-2010}, coordinate descent (CD)
may be the fastest algorithm for sparse inference, it is true for CPU implementation. However,
CD algorithm is not innate to parallel implementation due to its sequential nature.
Our CUDA implementation of an online sparse coding algorithm based on FISTA algorithm speedup the process $7\!\sim\!10$ times over SPAMS.

\section{Experiments}
\label{section:experiments}
\begin{table}[t]
\caption{Classification accuracies of baseline convolutional neural network.}
\label{baseline}
\vskip 0.15in
\begin{center}
\begin{small}
\begin{sc}
\begin{tabular}{lcr}
\hline
\abovespace\belowspace
ID & Configuration  & Test accuracy \\
\hline
\abovespace
cudaconv & Adaptive LR              & $81.0\%$ \\
standard  & Fixed LR                & $77.5\%$ \\
nodropout & - dropout               & $75.7\%$\\
nopadding & - padding              & $73.3\%$ \\
nonorm    &  - normalization          & $73.1\%$\\
nopooling & - pooling &               $65.4\%$\\
2conv     & nonorm with 2 conv            & $72.5\%$\\
1conv     & nonorm with 1 conv            & $60.4\%$\\
\hline
\end{tabular}
\end{sc}
\end{small}
\end{center}
\vskip -0.1in
\end{table}
\subsection{Baseline and experimental protocols}
All the experiments are carried out on the well-known CIFAR-10 data set. It contain 10 classes
and totally has 50000 and 10000 $32 \times 32$ color images in training and testing set respectively.
As a baseline, we reproduce the cuda-convnet experiments on CIFAR-10 \cite{krizhevsky_2012}.
In our implementation, we don't use the data augmentation technique such as image translation and horizontal reflection,
except that the mean image is subtracted from all the images as the cuda-convnet setting does.
With an adaptive learning rate, we get an $81\%$ test accuracy with a single model. Since we are
focusing on the network structure such as inter-layer connection and depth, we evaluate the contribution
of techniques such as adaptive learning rate, dropout, padding and local normalization to the baseline
system. These results are reported in Table \ref{baseline}. Note that in the table, the configuration
in each line is modified from the above line without otherwise stated. Therefore, the NONORM is a setting
without modules such as adaptive learning
rate, dropout, padding and local normalization. This setting reflects the contribution of network architecture.
If we further remove the pooling layer from NONORM, the performance drops to $65.4\%$, which implies pooling
plays a significant role in this task. Removing the $3$-rd convolutional layer from NONORM, the performance
drops slightly to $72.5\%$, which implies the $3$-rd convolution layer contributes marginally.
If further removing the $2$-nd convolutional layer from 2CONV, the accuracy drops dramatically to $60.4\%$,
which indicates two convolutional layers are essential.

Different from the input of convolutional network, the input images to sparse coding are all whitened.
Empirically, we find whitening is crucial for sparse coding to get meaningful structure \footnote{This is
consistent with the observation that whitening is essential for independent component analysis (ICA) \cite{hyvarinen_2001}}.
In both CNN and our algorithm, the learning rate is fixed to $0.001$. Without otherwise stated,
the network includes a 10-output softmax layer. The sparse coding dictionaries in all the layers
are with 2048 dimension and group size is 4.
All the experiments are carried out on a Tesla T20K GPU.
\subsection{Overall performance}
\begin{table}[t]
\caption{Classification accuracies of structure learning.}
\label{structure}
\vskip 0.15in
\begin{center}
\begin{small}
\begin{sc}
\begin{tabular}{lcr}
\hline
\abovespace\belowspace
ID & Configuration  & Test accuracy \\
\hline
\abovespace
1layer & 512                   & $63.0\%$ \\
2layer  & 512/512                & $68.0\%$ \\
3layer & 512/512/512                & $69.8\%$\\
\hline
\end{tabular}
\end{sc}
\end{small}
\end{center}
\vskip -0.1in
\end{table}
From Table \ref{structure}, we can observe that the single layer network of learned structure
achieves a test accuracy $63.0\%$ which outperforms $60.4\%$ of the single layer setting in Table \ref{baseline}.
The two layer architecture achieves a performance $68\%$ which outperforms one layer model but
is inferior to $72.5\%$ produced by CNN with two convolutional layer.

\subsection{Evaluating inter-layer connection density}
\begin{table}[t]
\caption{Evaluating the role of inter-layer connections.}
\label{table:density}
\vskip 0.15in
\begin{center}
\begin{small}
\begin{sc}
\begin{tabular}{lccr}
\hline
\abovespace\belowspace
Density & random  & RBM & sparse coding\\
\hline
\abovespace
$0.1\%$         &  $30.2\%$      & $23.2\%$ & $40.7\%$\\
$0.25\%$         &  $31.6\%$      & $28.4\%$ & $40.1\%$\\
$0.5\%$         &  $30.3\%$      & $31.3\%$ & $51.2\%$\\
$10\%$         &  $48.2\%$      & $39.7\%$ & $57.2\%$\\
$30\%$         &  $54.7\%$      & $39.6\%$ & $56.2\%$\\
$70\%$        &  $56.2\%$      & $39.4\%$ & $56.6\%$\\
\hline
\end{tabular}
\end{sc}
\end{small}
\end{center}
\vskip -0.1in
\end{table}
To demonstrate the role of inter-layer connection, we compare three types of structures in a single layer network: (1) randomly generated structures,
(2) structures by sparsifying restricted Boltzmann machines (RBM), (3) structures learned by sparse coding.
We define the connection density as the ratio between
active connections in structure mask $\mathbf{M}_{k}$ and the number of full connections.
As shown in Table \ref{table:density} we evaluate the three settings in several connection
density level. Basically, we can observe denser connections bring performance gain.
However, the performance saturates even keeping randomly $30\%$ connections.
The structures generated by sparse coding outperform the random structure. Surprisingly, structures generated by RBM are even inferior to random structures.
In the learned structure, at the same density level, sparse coding always outperforms RBM.
For sparse coding, by keep $10\%$ connections is sufficient.
Theoretically, a full connection network can emulate any sparse connection ones just by setting the dis-connected weights of the
to zeros. If a sparsely connected network is known to be optimal, ideally,
a full connection network with appropriate weights can yield exactly the same behavior.
However, the BP algorithm usually can not
converge to this optimal weights, due to the local optimum properties.

\subsection{Evaluating the role of structure mask}
\begin{table}[t]
\caption{Evaluating the role of structure mask.}
\label{table:mask}
\vskip 0.15in
\begin{center}
\begin{small}
\begin{sc}
\begin{tabular}{lcr}
\hline
\abovespace\belowspace
Setting & One layer  & Two layer\\
\hline
\abovespace
BP           &  $57.9\%$      & $50.2\%$ \\
Weight       &  $57.6\%$      & $57.4\%$ \\
Weight+BP    &  $63.7\%$      & $62.4\%$ \\
Mask+BP       &  $58.8\%$      & $58.1\%$ \\
Weight+Mask+BP &  $64.0\%$      & $68.6\%$\\
\hline
\end{tabular}
\end{sc}
\end{small}
\end{center}
\vskip -0.1in
\end{table}
To investigate the role of the learned structure mask, we carry out the following experiments: (1) randomly initialized BP;
(2), initializing the network parameter without fine-tuning; (3), initializing the
network parameter with pre-trained dictionary and fine-tuned with BP; (4), restricting
the network structure with learned mask and randomly initializing the parameters, finally fine-tuning with BP;
(5), restricting the network structure with mask and initializing
the connection parameters with pre-trained dictionary, finally fine-tuning with BP.
The results are reported in Table \ref{table:mask}. We can observe that, even use the pre-trained
dictionary as a feature extractor, it significantly outperforms BP with random initialization.
Fine-tuning with BP always brings performance gain. The Mask+BP outperforming BP indicates that
the structure prior provided by sparse coding is very useful. Finally, the strategy of combining
Weight+Mask+BP outperforms all the others.

\subsection{Evaluating network depth}
Table \ref{structure} shows the performances of networks with different depth. We can observe that adding more layers is helpful,
however the marginal performance gain diminishes as the depth increases. Interestingly, Figure \ref{entropy_accuracy} shows
a similar behavior of coding efficiency. This empirically justifies the approach determining the depth by coding efficiency.

\section{Conclusions}
In this work, we have studied the problem of structure learning for DNN. We have proposed to use the principle of efficient coding principle for unsupervised structure learning and have designed effective structure learning algorithms based on global sparse coding, which can achieve the performance as good as the best human-designed structure (i.e., convolutional neural networks).

For the future work, we will investigate the following aspects. First, we have empirically shown that redundancy reduction is positively correlated to accuracy improvement. We will explore the theoretical connections between the principle of redundancy reduction and the performance of DNN. Second, we will extend and apply the proposed algorithms to other applications including speech recognition and natural language processing. Third, we will study the structure learning problem in the supervised setting.

\bibliography{structure}
\bibliographystyle{icml2014}


\section{Appendix}


\begin{theorem}
\label{theorem:mmi_me}
Let the component-wise nonlinear transfer function $\mathbf{\sigma}_{i}$ be the cumulated distribution
function (CDF) of $U_{i}$, minimizing $I(\mathbf{Z})$ is equivalent to maximizing $I(\mathbf{X};\mathbf{Z})$.
\end{theorem}

\section{The Proof Sketch of Theorem \ref{theorem:mmi_me}}
Under the conditions of no noise \cite{bell_1995} or only additive output noise \cite{nadal_1994},
the term $H(\mathbf{Z}|\mathbf{X})$ has no
relation to the transformation $\mathbf{W}$ and $\mathbf{\sigma}$. Therefore, we have
\begin{lemma}
\label{lemma:nadal}
\cite{nadal_1994} Maximizing the mutual information $I(\mathbf{X};\mathbf{Z})$ between
input and output is equivalent to maximizing the output entropy $H(\mathbf{Z})$
under no noise or only additive output noise.
\end{lemma}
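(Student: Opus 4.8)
The plan is to prove the equivalence by decomposing the input--output mutual information into the output entropy minus a conditional-entropy term, and then showing that under either noise assumption the conditional term does not depend on the transformation $(\mathbf{W},\mathbf{\sigma})$ being optimized. Concretely, I would start from the identity
\begin{equation}
I(\mathbf{X};\mathbf{Z}) = H(\mathbf{Z}) - H(\mathbf{Z}\mid\mathbf{X}),
\end{equation}
which holds for differential entropies in the continuous-valued setting considered here. If I can establish that $H(\mathbf{Z}\mid\mathbf{X})$ is a constant with respect to $\mathbf{W}$ and $\mathbf{\sigma}$, then the two objectives $I(\mathbf{X};\mathbf{Z})$ and $H(\mathbf{Z})$ differ only by an additive constant and share the same maximizer, which is exactly the claimed equivalence.

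The central step is the additive output-noise model, where $\mathbf{Z}=\mathbf{\sigma}(\mathbf{W}^{T}\mathbf{X})+\mathbf{N}$ with the noise $\mathbf{N}$ independent of $\mathbf{X}$ and with statistics fixed independently of the transformation. Conditioning on $\mathbf{X}=\mathbf{x}$ makes $\mathbf{\sigma}(\mathbf{W}^{T}\mathbf{x})$ a deterministic vector, so $\mathbf{Z}$ is merely a translate of $\mathbf{N}$. Since differential entropy is invariant under deterministic translation, I would obtain $H(\mathbf{Z}\mid\mathbf{X}=\mathbf{x})=H(\mathbf{N})$ for every $\mathbf{x}$, and hence $H(\mathbf{Z}\mid\mathbf{X})=H(\mathbf{N})$, a quantity depending only on the noise and not on $(\mathbf{W},\mathbf{\sigma})$. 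Substituting back yields $I(\mathbf{X};\mathbf{Z})=H(\mathbf{Z})-H(\mathbf{N})$, from which the equivalence is immediate.

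For the noiseless case I would proceed by a vanishing-noise limit of the above argument, following \cite{bell_1995}: as the noise scale tends to zero the map $\mathbf{X}\mapsto\mathbf{Z}$ becomes deterministic and invertible, and $H(\mathbf{Z}\mid\mathbf{X})$ degenerates to a term that, although divergent, remains independent of the transformation. Thus only the finite, parameter-dependent part $H(\mathbf{Z})$ governs the optimization, and maximizing $I(\mathbf{X};\mathbf{Z})$ again coincides with maximizing $H(\mathbf{Z})$.

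I expect the main obstacle to lie precisely in this deterministic case: with an exactly invertible noiseless map the differential conditional entropy $H(\mathbf{Z}\mid\mathbf{X})$ is formally $-\infty$, so the bare decomposition is ill-posed and one cannot simply call the conditional term a constant. The careful route is to retain an infinitesimal additive noise as a regularizer, verify that its entropy contribution is parameter-free at each fixed noise level, and only then take the limit; this justifies discarding $H(\mathbf{Z}\mid\mathbf{X})$ as an optimization-irrelevant constant. The additive-output-noise branch, by contrast, is essentially a one-line consequence of the translation invariance of differential entropy and poses no real difficulty.
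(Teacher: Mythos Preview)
Your proposal is correct and follows essentially the same route as the paper: the paper's argument is precisely that $I(\mathbf{X};\mathbf{Z})=H(\mathbf{Z})-H(\mathbf{Z}\mid\mathbf{X})$ and that, under no noise or additive output noise, $H(\mathbf{Z}\mid\mathbf{X})$ ``has no relation to the transformation $\mathbf{W}$ and $\mathbf{\sigma}$,'' citing \cite{bell_1995,nadal_1994} for the two cases. You simply supply the details the paper omits (translation invariance of differential entropy for the additive-noise case, and the vanishing-noise regularization for the deterministic case), so your write-up is more explicit than, but identical in substance to, the paper's sketch.
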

Furthermore, as shown in Figure \ref{effcient_coding:sub2}, when adopting each
bounded and invertible transfer function $\sigma_{i}$ as the cumulated distribution
function (CDF) of
the linear output $U_{i}$, each $Z_{i}$ follows a uniform distribution in $[0,1]$
and thus $H(Z_{i})$ achieves its maximum value. In this way, we have
\begin{lemma}
\label{lemma:yang}
\cite{yang_1997} Maximizing $H(\mathbf{Z})$ is equivalent to minimizing $I(\mathbf{Z})$
when the component transfer function $\sigma_{i}$ is the CDF of $U_{i}$.
\end{lemma}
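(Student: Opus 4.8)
The plan is to reduce the claimed equivalence to the single algebraic identity linking joint entropy, marginal entropies, and the multi-information (total correlation) of the output vector,
\begin{equation}
I(\mathbf{Z}) \;=\; \sum_{i} H(Z_{i}) \;-\; H(\mathbf{Z}),
\end{equation}
which holds for any random vector $\mathbf{Z}$ admitting a density (for $n=2$ it is the ordinary mutual information between the two units). Rearranging gives $H(\mathbf{Z}) = \sum_{i} H(Z_{i}) - I(\mathbf{Z})$, so the objective $H(\mathbf{Z})$ and the objective $-I(\mathbf{Z})$ differ only by the aggregate marginal term $\sum_{i} H(Z_{i})$. The whole proof then rests on showing that, under the hypothesis that each $\sigma_{i}$ is the CDF of $U_{i}$, this term is a constant independent of the linear map $\mathbf{W}$ (indeed equal to its maximal attainable value), so that the two optimization problems coincide.

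First I would invoke the probability integral transform: if $U_{i}$ is a continuous real random variable with CDF $F_{U_{i}}$, then $Z_{i} = \sigma_{i}(U_{i}) = F_{U_{i}}(U_{i})$ is uniform on $[0,1]$, hence $H(Z_{i}) = 0$ for every $i$ and $\sum_{i} H(Z_{i}) = 0$ regardless of $\mathbf{W}$. Second I would note that, since the range of every admissible $\sigma_{i}$ is $[0,1]$, any attainable $Z_{i}$ is supported on $[0,1]$, and among all such distributions the uniform one uniquely maximizes differential entropy — the standard maximum-entropy fact, which follows from $H(Z_{i}) = -D\!\left(p_{Z_{i}}\,\|\,\mathrm{Unif}[0,1]\right) \le 0$ with equality iff $Z_{i}$ is uniform. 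Thus $0$ is not merely constant but is the maximum of $\sum_{i} H(Z_{i})$. Substituting $\sum_{i} H(Z_{i}) = 0$ into the identity yields $H(\mathbf{Z}) = -I(\mathbf{Z})$ on the feasible set, so $\argmin_{\mathbf{W}} I(\mathbf{Z}) = \argmax_{\mathbf{W}} H(\mathbf{Z})$, and the extremal value $I(\mathbf{Z}) = 0$ (exact independence of the output units) is attained precisely when $H(\mathbf{Z})$ reaches its maximum; this also dovetails with Figure \ref{effcient_coding:sub2}, which depicts exactly the CDF transform producing uniform marginals.

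The main obstacle I anticipate is conceptual rather than computational: one must be careful about what ``maximizing'' ranges over. The transfer functions $\sigma_{i}$ are not free parameters here — they are slaved to $\mathbf{W}$ through the requirement that $\sigma_{i}$ be the CDF of $U_{i} = \mathbf{W}_{i}^{T}\mathbf{X}$, so as $\mathbf{W}$ varies both $U_{i}$ and $\sigma_{i}$ move in lockstep. The point to make precise is that this coupling is exactly what pins every marginal entropy to its maximal value $0$, leaving only the dependency term $I(\mathbf{Z})$ free to vary. A secondary technical point is the regularity needed for the probability integral transform to give an \emph{exactly} (not merely approximately) uniform $Z_{i}$: one needs $U_{i}$ absolutely continuous with $F_{U_{i}}$ strictly increasing on its support, which holds under the standing assumption that the inputs have a density and $\mathbf{W}$ is full rank; atoms or flat segments of $F_{U_{i}}$ would have to be ruled out or handled separately. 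Finally, I would record the clean summary that $H(\mathbf{Z}) + I(\mathbf{Z})$ is constant on the feasible set, so the two criteria are order-reversing reparametrizations of one another and share the same optimizer.
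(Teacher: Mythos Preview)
Your proposal is correct and follows essentially the same route as the paper: the paper also uses the identity $H(\mathbf{Z})=\sum_{i}H(Z_{i})-I(\mathbf{Z})$ together with the probability integral transform to conclude that each $Z_{i}$ is uniform on $[0,1]$, so $H(Z_{i})=0$ and hence $H(\mathbf{Z})=-I(\mathbf{Z})$. Your added remarks on the coupling of $\sigma_{i}$ to $\mathbf{W}$ and the regularity needed for exact uniformity are more careful than the paper's sketch, but the core argument is the same.
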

Now we can prove Theorem \ref{theorem:mmi_me}, which states the equivalence between
the maximum information preservation principle of \cite{linsker_1988} and the the
redundancy reduction principle of \cite{barlow_1961}.
\begin{proof}
Firstly, we have the following basic facts,
\setlength{\arraycolsep}{3pt}
\begin{eqnarray}
I(\mathbf{X};\mathbf{Z})&=&H(\mathbf{Z}) - H(\mathbf{Z}|\mathbf{X})\\
H(\mathbf{Z})&=&\sum_{i}H(Z_{i})-I(\mathbf{Z})
\end{eqnarray}
\setlength{\arraycolsep}{5pt}Using the conclusion of Lemma \ref{lemma:nadal} and
Lemma \ref{lemma:yang}, we know that maximizing $I(\mathbf{X};\mathbf{Z})$ is
equivalent to minimizing $I(\mathbf{Z})$ under the given conditions.
\end{proof}

\section{How to Measure the Efficiency Gain}
\label{appendix:gain}
\begin{figure}
\centering
\includegraphics[bb = 63 87 576 346,width=0.3\textwidth]{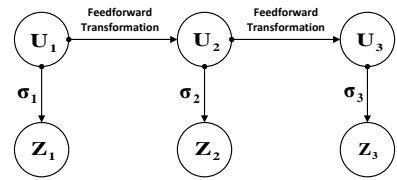}
\caption{Measuring high-dimensional mutual information via CDF transfer function}
\label{mlp_entropy}
\end{figure}
To determine the depth of a feed-forward neural networks, we need to know at what
depth the additional layer will not bring efficiency gain.
Remember that the efficiency means fewer redundancies among all dimensions, therefore,
we need to measure the independence between all all features in the same layer.
As shown in Figure \ref{mlp_entropy}, let $\mathbf{U}_{i}, i=1,2,3$ be three feature
maps in a feed-forward networks, direct computing $I(\mathbf{U}_{1})$, $I(\mathbf{U}_{2})$ and $I(\mathbf{U}_{3})$ are infeasible
for high dimensional data. This paper develops a technique to quantitatively measure
those criteria. (For simplicity of analysis, we assume the dimensions of all the layer are the same).
\begin{lemma}
If $\sigma_{i}$ is the CDF of $U_{i}$, $I(\mathbf{U})=-H(\mathbf{Z})$ holds.
\label{lemma:iu_hz}
\end{lemma}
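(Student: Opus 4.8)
The plan is to combine two ingredients that are already in play in this paper: the probability integral transform (the CDF nonlinearity makes each coordinate standard uniform), and the invariance of the multi-information $I(\cdot)$ under coordinate-wise invertible maps. First I would recall the decomposition used in the proof of Theorem~\ref{theorem:mmi_me}, namely that
\[
I(\mathbf{Z}) = \sum_i H(Z_i) - H(\mathbf{Z}),
\]
which is just the definition of the total correlation (multi-information) among the components of $\mathbf{Z}$, rewritten. The same identity holds verbatim for $\mathbf{U}$.

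Next I would observe that, since $\sigma_i$ is the CDF of $U_i$, the variable $Z_i=\sigma_i(U_i)$ is uniformly distributed on $[0,1]$, so its differential entropy vanishes: $H(Z_i)=\int_0^1 \log 1\,dz = 0$. Plugging this into the decomposition for $\mathbf{Z}$ gives $I(\mathbf{Z}) = -H(\mathbf{Z})$.

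It then remains to show $I(\mathbf{U}) = I(\mathbf{Z})$. Here I would use that $\mathbf{Z}=(\sigma_1(U_1),\dots,\sigma_d(U_d))$ is obtained from $\mathbf{U}$ by a diagonal map whose components are strictly increasing, hence invertible on the support of each $U_i$. Writing the joint and marginal densities of $\mathbf{Z}$ via the change-of-variables formula, the Jacobian factor $\prod_i \sigma_i'(u_i)$ appearing in the joint density is exactly cancelled by the product of the same one-dimensional factors appearing in the marginals, so the likelihood ratio $p(\mathbf{z})/\prod_i p(z_i)$ equals $p(\mathbf{u})/\prod_i p(u_i)$ after the substitution, and taking expectations of the logarithm yields $I(\mathbf{U})=I(\mathbf{Z})$. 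Combining the three steps gives $I(\mathbf{U}) = I(\mathbf{Z}) = -H(\mathbf{Z})$.

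The main obstacle I anticipate is purely one of regularity rather than of real difficulty: the change-of-variables step tacitly requires each $U_i$ to be absolutely continuous with a density that is positive on an interval, so that $\sigma_i$ is a genuine bijection between that interval and $(0,1)$; if the CDF had jumps or flat pieces, $Z_i$ would fail to be uniform and $\sigma_i$ would fail to be invertible. I would therefore prove the lemma under the standing ``mild assumptions'' used throughout the paper (the $U_i$ are absolutely continuous), which is precisely the regime in which the CDF transfer function is invoked elsewhere, and on the support all the manipulations above become rigorous.
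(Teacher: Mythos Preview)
Your proposal is correct and follows essentially the same route as the paper: use the probability integral transform to get $H(Z_i)=0$ and hence $I(\mathbf{Z})=-H(\mathbf{Z})$, and then invoke the invariance $I(\mathbf{U})=I(\mathbf{Z})$ under component-wise invertible maps via the diagonal Jacobian cancellation. The only cosmetic difference is that the paper packages the invariance step as a separate lemma (Lemma~\ref{lemma:iu_iz}) rather than proving it inline, and it does not make the absolute-continuity caveat explicit.
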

\begin{proof}
Since the CDF transfer function will make each output $Z_{i}$ with a uniform distribution
in $[0,1]$, $H(Z_{i})$ will be zero. Therefore, $I(\mathbf{Z})=-H(\mathbf{Z})$.
The lemma \ref{lemma:iu_iz} states that $I(\mathbf{U})=I(\mathbf{Z})$ if $\sigma_{i}$ is
continuous and invertible. Apparently, the CDF transfer function satisfies this condition. Thus, $I(\mathbf{U})=-H(\mathbf{Z})$.
\end{proof}
\begin{lemma}
If the component-wise transfer function $\sigma_{i}$ in $Z_{i} = \sigma_{i}(U_{i})$ is
continuous and invertible, $I(\mathbf{U})$ equals $I(\mathbf{Z})$.
\label{lemma:iu_iz}
\end{lemma}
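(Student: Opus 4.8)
}
The plan is to write the multi-information $I(\mathbf{U})$ as a Kullback--Leibler divergence and exploit the invariance of relative entropy under invertible maps. Recall that $I(\mathbf{U}) = D\left(p_{\mathbf{U}} \,\|\, \prod_{i} p_{U_{i}}\right)$, the divergence between the joint law of $\mathbf{U}$ and the product of its marginals, and that the same identity holds for $\mathbf{Z}$. So it suffices to show that the component-wise map $\sigma = (\sigma_{1},\dots,\sigma_{d})$ pushes $p_{\mathbf{U}}$ forward to $p_{\mathbf{Z}}$ and pushes $\prod_{i} p_{U_{i}}$ forward to $\prod_{i} p_{Z_{i}}$, and then to invoke the fact that $D(\mu\,\|\,\nu) = D(\sigma_{\#}\mu \,\|\, \sigma_{\#}\nu)$ for any bijection $\sigma$.

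First I would verify the two push-forward claims. That $\sigma_{\#} p_{\mathbf{U}} = p_{\mathbf{Z}}$ is immediate from $\mathbf{Z} = \sigma(\mathbf{U})$. For the product measures, the key observation is that a component-wise map commutes with the formation of product marginals: since $\sigma$ acts coordinate-wise it sends a product measure to the product of the pushed-forward factors, and the $i$-th factor $(\sigma_{i})_{\#} p_{U_{i}}$ is exactly $p_{Z_{i}}$; hence $\sigma_{\#}\left(\prod_{i} p_{U_{i}}\right) = \prod_{i} p_{Z_{i}}$. Combining this with the invariance of the KL divergence under the bijection $\sigma$ yields $I(\mathbf{U}) = I(\mathbf{Z})$.

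Alternatively, if one is willing to assume the $\sigma_{i}$ are $C^{1}$ diffeomorphisms (as the CDF transfer functions effectively are away from the points where the density of $U_{i}$ vanishes), one can argue directly with differential entropies. The change-of-variables formula gives $H(Z_{i}) = H(U_{i}) + \mathbb{E}\left[\log|\sigma_{i}'(U_{i})|\right]$, and because the Jacobian of $\sigma$ is diagonal, $H(\mathbf{Z}) = H(\mathbf{U}) + \sum_{i} \mathbb{E}\left[\log|\sigma_{i}'(U_{i})|\right]$. Subtracting, the Jacobian terms cancel and $I(\mathbf{Z}) = \sum_{i} H(Z_{i}) - H(\mathbf{Z}) = \sum_{i} H(U_{i}) - H(\mathbf{U}) = I(\mathbf{U})$.

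I expect the main obstacle to be technical rather than conceptual: justifying the invariance of the KL divergence --- or the change-of-variables formula for differential entropy --- at the level of generality stated, since the lemma assumes only continuity and invertibility, not differentiability or the existence of densities. A clean treatment would either (i) restrict to the case where $p_{\mathbf{U}}$ admits a density and the $\sigma_{i}$ are piecewise $C^{1}$, which covers every situation arising in this paper, or (ii) phrase everything measure-theoretically through push-forwards, where continuity plus invertibility of each $\sigma_{i}$ (making $\sigma$ a Borel isomorphism) is precisely what is needed for $D(\sigma_{\#}\mu\,\|\,\sigma_{\#}\nu) = D(\mu\,\|\,\nu)$ with no differentiability assumption. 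I would adopt (ii) for the cleanest statement, since it needs exactly the hypotheses in the lemma and nothing more.
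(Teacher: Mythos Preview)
Your proposal is correct. Your \emph{alternative} argument---expanding $H(Z_i)$ and $H(\mathbf{Z})$ via the change-of-variables formula and observing that the Jacobian of a component-wise map is diagonal so the correction terms cancel---is exactly the paper's proof.

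Your \emph{primary} argument, via the identity $I(\mathbf{U}) = D\bigl(p_{\mathbf{U}}\,\|\,\prod_i p_{U_i}\bigr)$ together with the invariance of KL divergence under measurable bijections, is a genuinely different route. It is cleaner at the stated level of generality: the paper's proof implicitly requires each $\sigma_i$ to be differentiable (so that $\partial Z_i/\partial U_i$ and the Jacobian determinant are defined) and requires $\mathbf{U}$ to admit a density, neither of which follows from ``continuous and invertible'' alone. Your push-forward formulation needs only that $\sigma$ be a Borel isomorphism, which the hypotheses guarantee, and it sidesteps differential entropies entirely. The trade-off is that the paper's computation is more elementary and transparent for readers comfortable with densities, while yours is shorter and matches the lemma's hypotheses exactly---a point you already identified as the main technical gap in the paper's approach.
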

\begin{proof}
Firstly, we know the facts
\setlength{\arraycolsep}{3pt}
\begin{eqnarray}
I(\mathbf{Z})&=&\sum_{i}H(Z_{i}) - H(\mathbf{Z})\label{component:mi_z}\\
I(\mathbf{U})&=&\sum_{i}H(U_{i}) - H(\mathbf{U})\\
H(Z_{i})&=&H(U_{i}) + \int_{U_{i}} \mathrm{d}U_{i} p(U_{i}) \log |\frac{\partial Z_{i}}{\partial U_{i}}|\\
H(\mathbf{Z})&=&H(\mathbf{U})+\int_{\mathbf{U}}\mathrm{d}\mathbf{U}p(\mathbf{U})\log |J|
\end{eqnarray}
\setlength{\arraycolsep}{5pt}where $|\cdot|$ denotes the absolute value and $J$ indicates the
determinant of Jacobian matrix, that is, $J=\mathrm{det}([\frac{\partial Z_{i}}{\partial U_{j}}]_{ij})$.
Note that the continuous and invertible $\sigma_{i}$ guarantees that $\log |\frac{\partial Z_{i}}{\partial U_{i}}|$
and $\log |J|$ are well-defined. To prove $I(\mathbf{Z})=I(\mathbf{U})$, we must prove
\begin{equation}
\int_{\mathbf{U}}\mathrm{d}\mathbf{U}p(\mathbf{U})\log |J|=\sum_{i}\int_{U_{i}} \mathrm{d}U_{i} p(U_{i}) \log |\frac{\partial Z_{i}}{\partial U_{i}}|
\end{equation}
It is straightforward since $\sigma_{i}$ is component-wise transfer function,
the Jacobian matrix is a diagonal matrix.
\end{proof}

\section{Nonparametric Entropy Estimation}
\label{appendix:estimator}
Estimating the entropy of high dimensional continuous random vectors such as $H(\mathbf{Z})$
is an challenging problem in both theories and applications. For a comprehensive review, please refer to \cite{beirlant_1997}. Here,
we use a nonparametric entropy estimator based on $k$-nearest neighbor distances, since
empirically it demonstrates better convergence properties \cite{kraskov_2004}
\begin{equation}
\hat{H}(\mathbf{Z}) = -\psi(k) + \psi(N) + \log c_{d} + \frac{d}{N}\sum_{i=1}^{N}\log \epsilon(i)
\label{equation:ee_knn}
\end{equation}
in which $d$ is the dimension of random vector, $N$ denotes the size of sample, $\psi(\cdot)$ is
a digamma function, $c_{d}$ equals $\pi^{\frac{d}{2}}/{\Gamma(1+\frac{d}{2})}/2^{d}$, $\epsilon(i)$
denotes twice the distance from the $i$-th example to its $k$-th neighbor. $k$-d tree can be used to
serve efficient nearest neighbor queries.

\section{Visualization}

\begin{figure*}
\centering
\includegraphics[width=0.9\textwidth]{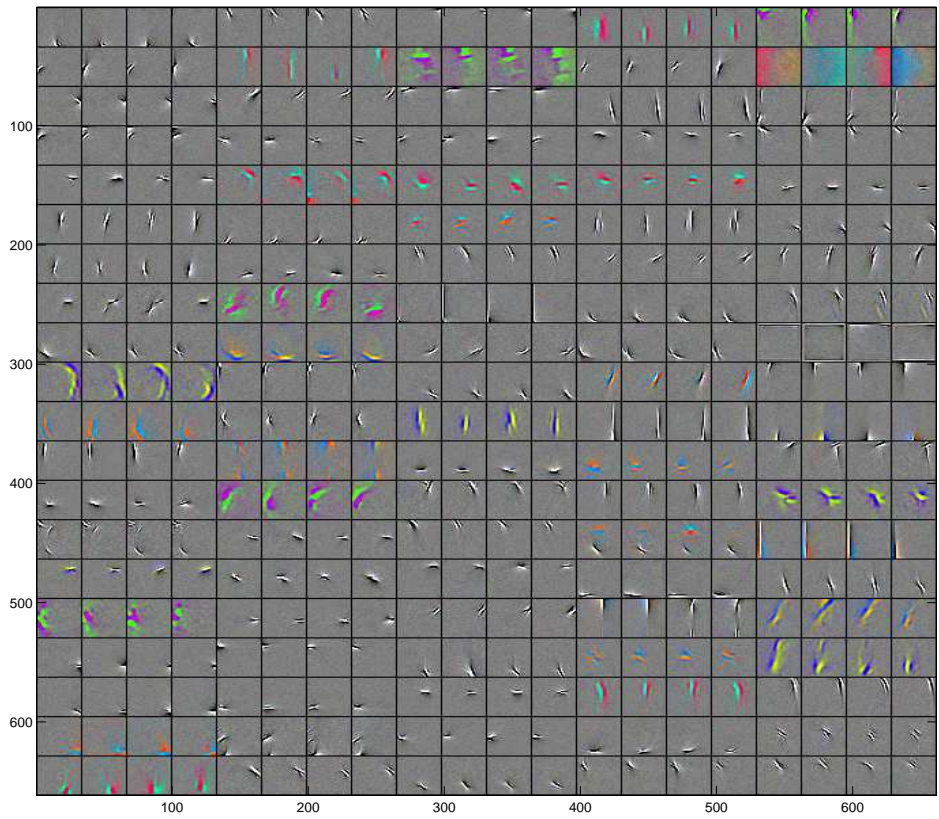}
\caption{A sample of the learned basis by group sparse coding. }
\label{layer1}
\end{figure*}

\end{document}